\newtheorem{theorem}{Theorem}[section]
\title{The Importance of Being Parameters: \\An Intra-Distillation Method for Serious Gains}
\author{Haoran Xu, Philipp Koehn,  Kenton Murray\\[1em]
Johns Hopkins University\\
\texttt{\{hxu64,phi,kenton\}@jhu.edu}\\[1em]
}
\date{}
\begin{document}
\maketitle
\begin{abstract}
% Although large-scale models achieve remarkable success on various tasks, they are reported to be parameter inefficient due to the existence of many redundant parameters. 
Recent model pruning methods have demonstrated the ability to remove redundant parameters without sacrificing model performance. Common methods remove redundant parameters according to the parameter sensitivity, a gradient-based measure reflecting the contribution of the parameters. In this paper, however, we argue that redundant parameters can be trained to make beneficial contributions. We first highlight the large sensitivity (contribution) gap among high-sensitivity and low-sensitivity parameters and show that the model generalization performance can be significantly improved after balancing the contribution of all parameters. Our goal is to balance the sensitivity of all parameters and encourage all of them to contribute equally. We propose a general task-agnostic method, namely \textbf{intra-distillation}, appended to the regular training loss to balance parameter sensitivity. Moreover, we also design a novel adaptive learning method to control the strength of intra-distillation loss for faster convergence. Our experiments show the strong effectiveness of our methods on machine translation, natural language understanding, and zero-shot cross-lingual transfer across up to 48 languages\footnote{Code is available at \url{https://github.com/fe1ixxu/Intra-Distillation}.}, e.g., a gain of 3.54 BLEU on average across 8 language pairs from the IWSLT'14 dataset.
\end{abstract}

\section{Introduction}
% Scaling up the model size up to millions of parameters, e.g., BERT \citep{devlin2019bert}, XLM-R \citep{conneau2020unsupervised}, GPT3 \citep{gpt3} has achieved remarkable progress in various NLP tasks. Some studies \citep{kudugunta2021beyond,fedus2021switch} utilize mixture of experts (MoE), a type of sparsely-activated model, even increase the model size to billions and trillions of parameters. However, such large models are reported to be parameter inefficient. For example, Switch Transformer \citep{fedus2021switch} only achieves 0.7 score improvement over T5-large \citep{raffel2019exploring} on the GLUE benchmark \citep{wang2018glue} by using 35 times larger parameters.
\begin{figure}[ht]
    \centering
    \resizebox{0.7\linewidth}{!}{
    \includegraphics[width=7.5cm]{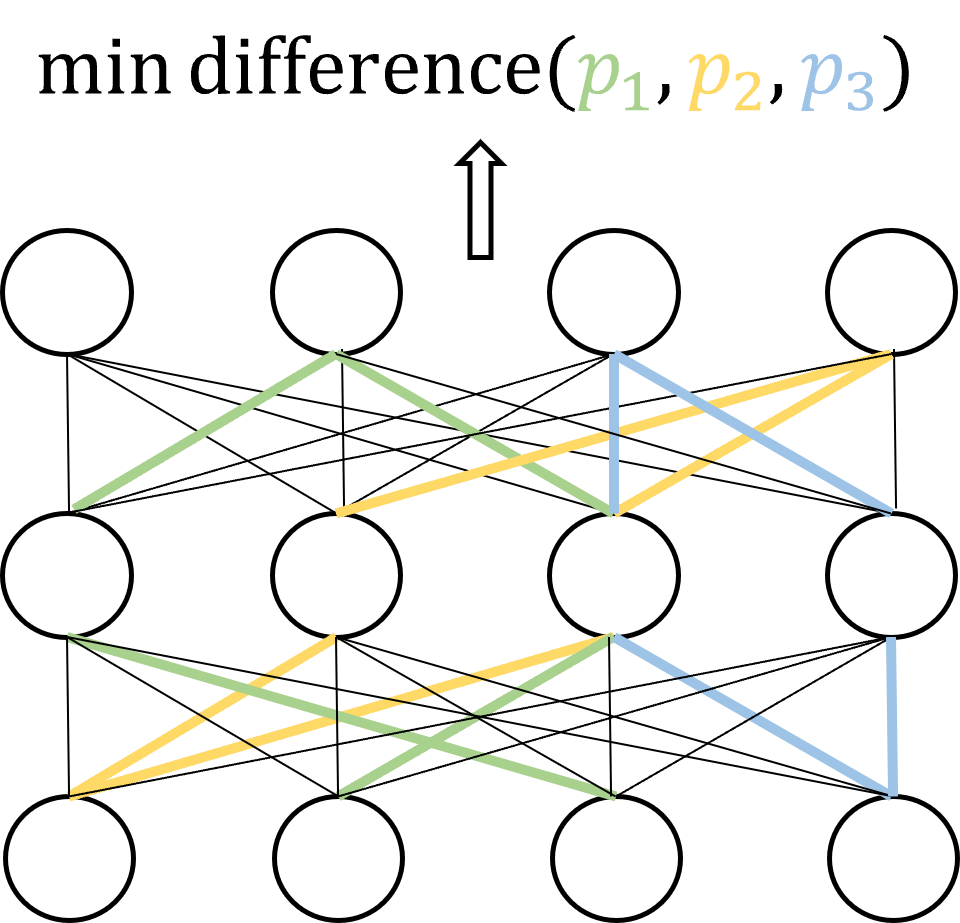}}
    \caption{Illustration of the intra-distillation process. We pass the model $K$ ($K=3$) times and obtain three outputs $p_1, p_2, p_3$. Each time we disable different subsets of parameters (illustrated by different colors). Minimizing the difference of the $K$ outputs approximates minimizing the contribution difference of these disabled subsets of parameters, which can significantly improve model generalization performance. }
    \label{fig:intro}
\end{figure}
Exploring efficient parameter use in neural networks is critical for improving computational quality and decreasing storage requirements \citep{han2015learning,li2016pruning}. The lottery ticket hypothesis \citep{frankle2018lottery} suggests that a small subset of parameters, namely `winning tickets', can reach similar performance compared to a dense model through iterative retraining. Recent techniques for pruning models \citep{lubana2020gradient,sanh2020movement,xiao2019autoprune,molchanov2016pruning} have shown to be successful in reducing redundant parameters of trained networks by over 80\% without obvious loss of the model quality. Despite the success of winning tickets, it usually does not offer better performance and is actually computationally expensive to obtain (needing iterative pruning and retraining). Moreover, unstructured pruning barely accelerates inference computation because the device still computes the dense tensors, with the only difference being that they are filled with zeros. 

The main motivation behind pruning is the existence of redundant parameters which basically have no contribution to the model. Taking an opposite approach from pruning redundant parameters, we encourage all parameters to contribute. \citet{Qiao_2019_CVPR,liang2021no} recently showed that redundant parameters are insufficiently trained, and can actually contribute more when they are trained properly. Following this line, we also argue that there is a large room for improving model generalization by making redundant parameters contribute instead of discarding them. However, our approach differs in that we change the training objective, as opposed to learning rates.

In this paper, we show significant improvement after balancing the contribution of all parameters on various tasks. Our goal is to \textbf{balance the sensitivity of parameters to encourage the equal contribution of each parameter}, where sensitivity is a gradient-based measure reflecting the degree of parameter contribution. Usually, lower-sensitivity parameters are considered redundant. However, in an extreme case of our goal, no parameter is redundant. Thus, we propose \textbf{intra-distillation}, a task-agnostic method aiming to minimize the sensitivity difference among each subset of parameters. Specifically, we obtain $K$ outputs by forward passing the model $K$ times, where we randomly disable a different subset of parameters for each pass (Figure \ref{fig:intro}). We deduce that minimizing the difference of these $K$ outputs approximates minimizing the sensitivity of the disabled parameters. Therefore, in each step of training, we can minimize the sensitivity of $K$ random groups of parameters. We list our main contributions are summarized as follows:
\begin{itemize}
  \itemsep0em 
   \item We introduce a new concept, i.e., the degree of contribution balance, describing how balanced the contribution of all parameters is. This allows us to formally define and measure how parameters can improve task performance. Moreover, we use balanced contribution of parameters to explain the successful `dark knowledge' transfer in knowledge distillation \citep{hinton2015distilling} between students and teachers who use the same architecture (termed self-distillation \citep{furlanello2018born}) (Section \ref{sec:analysis}).
  \item We propose the intra-distillation method with its adaptive strength control, which highly balances the sensitivity (contribution) of model parameters and leads to significantly better generalization performance (Section \ref{sec:method}). 
  \item We conduct wide-ranging experiments on machine translation, natural language understanding, and zero-shot cross-lingual transfer that show intra-distillation outperforms multiple strong baselines by a large margin, e.g., \textbf{3.54} BLEU point gains over the transformer model on average across 8 language pairs from the IWSLT'14 translation task (Section \ref{sec:experiments}).

\end{itemize}

\section{Why Balance the Contribution?}
\label{sec:analysis}
We investigate the contribution difference among parameters based on an important metric,  parameter sensitivity, which has been widely used in pruning under the name ``importance scores'' \citep{ding2019global, molchanov2019importance,lubana2020gradient}. Then, we highlight that  model performance benefits from balanced parameter contribution in a case study of knowledge distillation. 
%More important parameters that have higher sensitivity are kept, while lower ones are pruned.
\subsection{Sensitivity Definition}
The sensitivity (also named importance scores) of a set of parameters represents the impact on the loss magnitude when the parameters are zeroed-out. It suggests that higher-sensitivity parameters contribute more to the loss. Consider a model paramterized as $\bm{\Theta}$. We denote the model loss as $\mathcal{L}(\bm{\Theta})$, gradient of the loss with respect to the model parameters as $\nabla_{\bm{\Theta}}\mathcal{L}(\bm{\Theta})$, the sensitivity of a set of parameters $\bm{\Theta}_s$ as $\mathcal{I}(\bm{\Theta}_s)$, model parameters with zeroed $\bm{\Theta}_s$ as $\bm{\Theta}_{-s}$. We evaluate sensitivity of $\bm{\Theta}_s$ by how much loss is preserved after zeroing $\bm{\Theta}_s$.
\begin{equation}
    \mathcal{I}(\bm{\Theta}_s) = |\mathcal{L}(\bm{\Theta}) - \mathcal{L}(\bm{\Theta}_{-s})|
    \label{eq:score-define1}
\end{equation}
The equation above implies that the larger the absolute loss change, the more sensitive $\bm{\Theta}_s$ is and the more contribution to the loss it makes. However, it is not practical to forward pass the model every time to compute the sensitivity of an arbitrary set of parameters. Thus, we utilize a first-order Taylor expansion of $\mathcal{L(\cdot)}$ with respect to $\bm{\Theta}_s$ at $\bm{\Theta}$ to approximate $\mathcal{I}(\bm{\Theta}_s)$.
\begin{equation}
    \mathcal{I}(\bm{\Theta}_s) \approx |\bm{\Theta}_s^T\nabla_{\bm{\Theta}}\mathcal{L}(\bm{\Theta})|
    \label{eq:score-define2}
\end{equation}
% Most recent studies of model pruning are using variants of this method \citep{ding2019global, molchanov2019importance,lubana2020gradient}. More important parameters that have higher sensitivity are kept, while lower ones are pruned. 
% Oppsitely, SAGE \citep{liang2021no} use this method for sensitivity-guided adaptive learning rate to promote training on lower-sensitivity parameters, where parameters with lower importance scores will be assigned by higher learning rates, vice versa.

\begin{figure}[ht]
    \centering
    \resizebox{1\linewidth}{!}{
    \includegraphics[width=6.5cm]{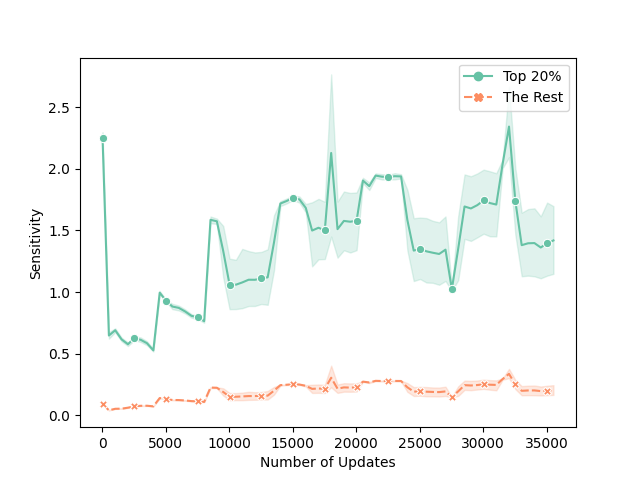}}
    \caption{The mean sensitivity along with the number of training updates of top 20\% most sensitive parameters and the rest of 80\% parameters. Though the top 20\% of parameters have significant variance in sensitivity at various stages of training, the bottom 80\% consistently have very little sensitivity.}
    \label{fig:sensitivity_vs_step}
\end{figure}

\subsection{Contribution Gap Among Parameters}
\label{sec:contri_gap}
%lee2018snip
Previous model pruning studies \citep{sanh2020movement,xiao2019autoprune} have shown that a small subset of parameters (e.g., 20\%) are extraordinarily effective for training, and the model performance is not significantly sacrificed after pruning. We attribute the success of pruning to the much larger contribution of high-sensitivity parameters over low-sensitivity parameters. We take machine translation on the IWSLT'14 German$\rightarrow$English (De$\rightarrow$En) dataset as our study object\footnote{Please find training details in Appendix \ref{app:train_detail}.}. We focus on the transformer$_{\texttt{small}}$ architecture \citep{vaswani2017attention}. We use Equation \ref{fig:sensitivity_vs_step} to track the sensitivity of each individual parameter and  visualize the mean sensitivity of the current top 20\% most sensitive parameters and the rest of 80\% parameters with the increasing of training updates in Figure \ref{fig:sensitivity_vs_step}. The sensitivity of the remaining 80\% of parameters are small and close to zero, but much larger for the top 20\% parameters\footnote{The best model is at 15K steps, where their gap is around 10 times apart.}.

\subsection{Benefits of Balanced Contribution}
\label{sec:kd}
We highlight the large contribution gap between parameters, and argue that the success of pruning is due to the modest contribution of low-sensitivity parameters. However, we take an alternative argument and pose the questions:  \textbf{ Do we overlook possible contributions of the low-sensitivity parameters when focusing on high sensitivity parameters? Will model performance improve when all parameters in a model contribute equally?} Here, we first define the degree of contribution balance and investigate a case study on knowledge distillation to show the benefits of more balanced contribution.
%original version: Whether we overlook the importance of low-sensitivity parameters at the expense of too much focus on the high-sensitivity ones? Will the model generalization performance be improved after balancing the contribution of high- and low-sensitivity parameters?
\paragraph{Degree of Contribution Balance} We define the degree of contribution balance to be simply evaluating the standard deviation of all parameter sensitivity. A lower standard deviation means that there is a more balanced contribution.

\begin{figure}[ht]
    \centering
    \resizebox{1\linewidth}{!}{
    \includegraphics[width=7.5cm]{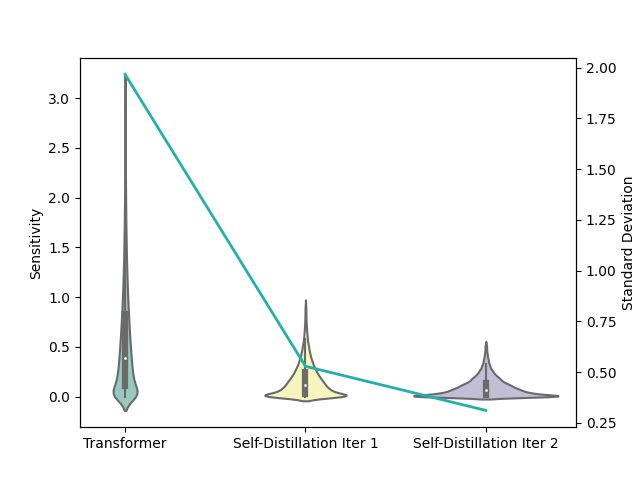}}
    \caption{Sensitivity distribution (violin plots aligned with left y-axis) along with their standard deviation (green curve aligned with right y-axis) in each iteration of self-distillation (the student and teacher use the same architecture).}
    \label{fig:kd_distribution}
\end{figure}

\begin{table}[ht]
\centering
\resizebox{0.7\linewidth}{!}{
\begin{tabular}{l|c}
\hline
Method      & De$\rightarrow$En \\ \hline
Transformer & 33.07               \\
Self-Distillation Round 1   & 34.87               \\
Self-Distillation Round 2   & 35.03               \\ \hline
\end{tabular}
}
\caption{SacreBLEU of iterative self-distillation on IWSLT'14 De$\rightarrow$En task.}
\label{tab:kd_result}
\end{table}
\paragraph{A Case Study on Knowledge Distillation}
We here take naive knowledge distillation (KD) \citep{hinton2015distilling} as a case study. We tie the success of KD to the more balanced contribution among parameters. KD aims to transfer knowledge from a teacher model to a student model. Specifically, the student model tries to minimize the Kullback–Leibler (KL) divergence between its output $p_s$ and the gold label $y$, and between $p_s$ and output of the teacher $p_t$. We here formulate a naive KD objective.
\begin{equation}
    \min \mathbbm{KL}(y\parallel p_s) + \mathbbm{KL}(p_t\parallel p_s)
    \label{eq:kd}
\end{equation}
Commonly, the teacher is a high-capacity model and student is more compact. However, recent studies \citep{furlanello2018born,zhang2019your,fang2020seed} show that the student model can significantly outperform the teacher when the student use the same architecture (and consequently, number of parameters) as the teacher, termed \textbf{self-distillation}. Using the previously described machine translation task in Section \ref{sec:contri_gap}, we conduct self-distillation experiments and iterate self-distillation twice, i.e., the student taught by the regular transformer model becomes a new teacher for the next student. In Table \ref{tab:kd_result}, we report \texttt{sacreBLEU} \citep{post-2018-call}. Similar to the previous literature, model performance substantially increase after each round of self-distillation. This surprising result is referred to in the literature as 'dark knowledge' \citep{gotmare2018closer,zhang2019your}. Some studies try to understand the `dark knowledge', e.g., in the view of regularization \citep{yuan2020revisiting} or ensemble \citep{allen2020towards}, but they only explain how it leads to performance improvements instead of how the model itself changes. Here, we argue that the `dark knowledge' transferred from teachers to such students is actually due to the more balanced contribution among parameters. We visualize the sensitivity distribution of all models via violin plots with their standard deviation in Figure \ref{fig:kd_distribution}\footnote{Implementation details of plotting are in Appendix \ref{app:sensitivity_vis}.}. Importantly, the parameter sensitivity becomes more balanced after each round of self-distillation. \textbf{We therefore argue that the effectiveness of self-distillation is caused by more balanced parameter contribution.}

Even though we hypothesize that balanced contributions explain why models improve under self-distilation, \textit{balanced contribution is not a sufficient condition for model improvement}. For instance, in an extreme case, all parameter values are 0, indicating that all parameters have equal contribution, but the model performance is nonsense. However, we hypothesize that the model generalization performance benefits from the constraints of contribution balance during training. Hence, this motivates us to propose a constraint term during training to improve the model generalization performance.

\section{Proposed Method}
\label{sec:method}
In the previous section, we showed two important findings; the large contribution gap between high- and low-sensitivity parameters, and that there is little understanding of the correlation between the better performance and more balanced contribution in self-distillation. In this section, we propose a general method to balance the parameter sensitivity (contribution) to improve the model performance.
%Our proposed method is more computationally efficient than self-distillation (we requiring training a model\textit{only once}), is easy to implement, and generalizes to numerous tasks.
\subsection{Intra-Distillation}
\label{sec:sd}
The sensitivity of parameters implies the degree of their contribution. We define our problem into minimizing the sensitivity difference among parameter groups. We randomly sample $K$ small groups of parameters\footnote{These groups of parameters may overlap.} $\{\bm{\Theta}_{s^1}, \cdots, \bm{\Theta}_{s^i}, \cdots, \bm{\Theta}_{s^K}\}$. Balancing sensitivity among all groups can be formulated as the following problem:
\begin{equation}
    \min  \sum_{i=1}^K\sum_{j=1}^K |\mathcal{I}(\bm{\Theta}_{s^i}) - \mathcal{I}(\bm{\Theta}_{s^j})|.
    \label{eq:min1}
\end{equation}
Based on Equation \ref{eq:score-define1}, it is equivalent to
\begin{multline}
 \min \sum_{i=1}^K\sum_{j=1}^K \bigl||\mathcal{L}(\bm{\Theta}) - \mathcal{L}(\bm{\Theta}_{-s^i})|- \\
    |\mathcal{L}(\bm{\Theta}) - \mathcal{L}(\bm{\Theta}_{-s^j})|\bigr|.
\end{multline}
Recall that $\bm{\Theta_{-s^i}}$ refers to the all parameters but zeroing out $\bm{\Theta_{s^i}}$. To facilitate training by not calculating $\mathcal{L}(\bm{\Theta})$ , we instead minimize the upper bound of the above objective\footnote{$||a|-|b||\leq |a-b|, \forall a,b\in\mathbbm{R}$.}. 

\begin{equation}
    \min \sum_{i=1}^K\sum_{j=1}^K |\mathcal{L}(\bm{\Theta}_{-s^i}) - \mathcal{L}(\bm{\Theta}_{-s^j})|
    \label{eq:min2}
\end{equation}
We denote the outputs of the model with $\bm{\Theta}_{-s^i}$ and $\bm{\Theta}_{-s^j}$ as $p_i$ and $p_j$, respectively. When we dissect Equation \ref{eq:min2} deeper, it actually tries to minimize the difference between $\mathcal{D}(y, p_i; \bm{\Theta}_{-s^i})$, the distance of the gold labels $y$ and $p_i$, and $\mathcal{D}(y, p_j; \bm{\Theta}_{-s^j})$, the distance of $y$ and $p_j$
\begin{equation}
    \min \sum_{i=1}^K\sum_{j=1}^K |\mathcal{D}(y, p_i; \bm{\Theta}_{-s^i}) - \mathcal{D}(y, p_j; \bm{\Theta}_{-s^j})|,
    \label{eq:min3}
\end{equation}
where $\mathcal{D}$ can be any similarity metrics, e.g., mean squared error (MSE) for regression tasks and Kullback–Leibler (KL) divergence for classification tasks. Instead of considering the loss difference between each pair of $p_i$ and $p_j$, we straightforwardly minimize the outputs $\{p_1, \cdots, p_i, \cdots, p_K\}$ without using $y$ as an intermediary. Most deep learning tasks can be categorized into classification and regression tasks. The outputs of classification tasks are probabilities while outputs of regression tasks could be any values. For classification tasks to which most NLP problems are boiled down, we propose a novel method, \textbf{X-divergence}, to measure the similarity among multiple distribution based on Jensen–Shannon (JS) divergence. We finalize our loss function to balance the parameter sensitivity as follows:

\begin{equation}
\begin{gathered}
    \mathcal{L}_{id} = \frac{1}{K}\sum_{i=1}^K \mathbbm{KL}(p_i\parallel\bar{p}) + \mathbbm{KL}(\bar{p}\parallel p_i) \\
    \text{where }\bar{p} = \frac{1}{K}\sum_{i=1}^Kp_i
    \label{eq:min_kl}.
\end{gathered}
\end{equation}
Here, we reduce the computation complexity from $\mathcal{O}(K^2)$ to $\mathcal{O}(K)$ compared to Equation \ref{eq:min3}. Different from the JS divergence that only calculates the KL divergence between $p_i$ and the `center' of all distributions $\bar{p}$, X-divergence also considers the KL divergence between $\bar{p}$ and $p_i$. We show that our X-divergence substantially outperforms JS divergence in Section \ref{sec:ex_mt}.

For regression tasks, we simply replace X-divergence with MSE to measure their similarity.
\begin{equation}
    \mathcal{L}_{id} = \frac{1}{K}\sum_{i=1}^K (p_i-\bar{p})^2
    \label{eq:min_mse}
\end{equation}

\subsection{Task-Agnostic Implementation}
\label{sec:implementation}
Intra-Distillation is easily  implemented into any deep learning task, without any model architecture modification. As presented in the previous Section \ref{sec:sd}, our final intra-distillation objective is to minimize the `distance' of $K$ outputs generated by sub-models which have $K$ different groups of disabled parameters. In the practical implementation, we run a forward-pass of the model $K$ times. For each pass, \textbf{we use dropout to simulate disabling a small subset of parameters}\footnote{Parameters connected to dropped nodes are equivalent to being disabled.}, and obtain the output. Thus, the final loss objective we want to optimize is composed of the regular training loss from each pass $\mathcal{L}(\bm{\Theta}_{-s^i})$ and intra-distillation loss $\mathcal{L}_{id}$.
\begin{equation}
    \min \frac{1}{K}\sum_{i=1}^K\mathcal{L}(\bm{\Theta}_{-s^i}) + \alpha \mathcal{L}_{id}
    \label{eq:final_loss}
\end{equation}
$\alpha$ is a hyper-parameter to control the strength of intra-distillation. The composition of the final loss is similar to the knowledge distillation loss in Equation \ref{eq:kd}. However, our second term minimizes the difference of outputs from the same model while  knowledge distillation minimizes the difference between the student and teacher (an external model).

\subsection{Adaptive Intra-Distillation}
\label{sec:adaptive-sd}
We notice that the intra-distillation term could slow down the convergence speed at the beginning of training, especially when it comes to a large $\alpha$ such as 5 or 10. More details will be discussed in Section \ref{sec:analysis-adpative}. Hence, we design an adaptive $\alpha$ algorithm that makes $\alpha$ small at the beginning of training and then becomes large afterwards to accelerate the convergence speed. \textbf{Ideally, $\alpha$ grows slowly at first and gets large quickly in the middle of training}. We denote $N$ as the total number of training steps and $x$ as current step. Our adaptive $\alpha'$ is formulated as follows.

\begin{gather}
\alpha'=
\begin{cases}
    \frac{p^\gamma\alpha}{N^\gamma}x^\gamma & x<\frac{N}{p}\\
    \alpha & x \geq \frac{N}{p}
\end{cases} \\
    \text{where } \gamma=\log_{\frac{p}{q}}\frac{1}{\alpha}
\label{eq:adaptive_alpha}
\end{gather}
An illustration of the growth of $\alpha'$ is shown in Figure \ref{fig:function}. Here, $p$ and $q$ are two sentinels ($q$ > $p$ > 0) to control the growth speed of $\alpha'$. Before the number of updates hits $\frac{N}{q}$, $\alpha'$ increase slowly from 0 to 1, because we want the model to pay less attention to intra-distillation. When training achieves $\frac{N}{q}$, $\alpha'$ is 1. $\mathcal{L}_{sd}$ now has the same weight as ordinary training loss. Then, the weight of intra-distillation should be raised substantially. $\alpha'$ increase quickly from 1 to $\alpha$ before update step achieves $\frac{N}{p}$. At the end, $\alpha'$ will be $\alpha$ constantly in the rest of the training steps.  Note that we only apply adaptive intra-distillation in the case of $\alpha > 1$. Otherwise, it is unnecessary to use adaptive learning.

\begin{figure}[ht]
    \centering
    \resizebox{1\linewidth}{!}{
    \includegraphics[width=7.5cm]{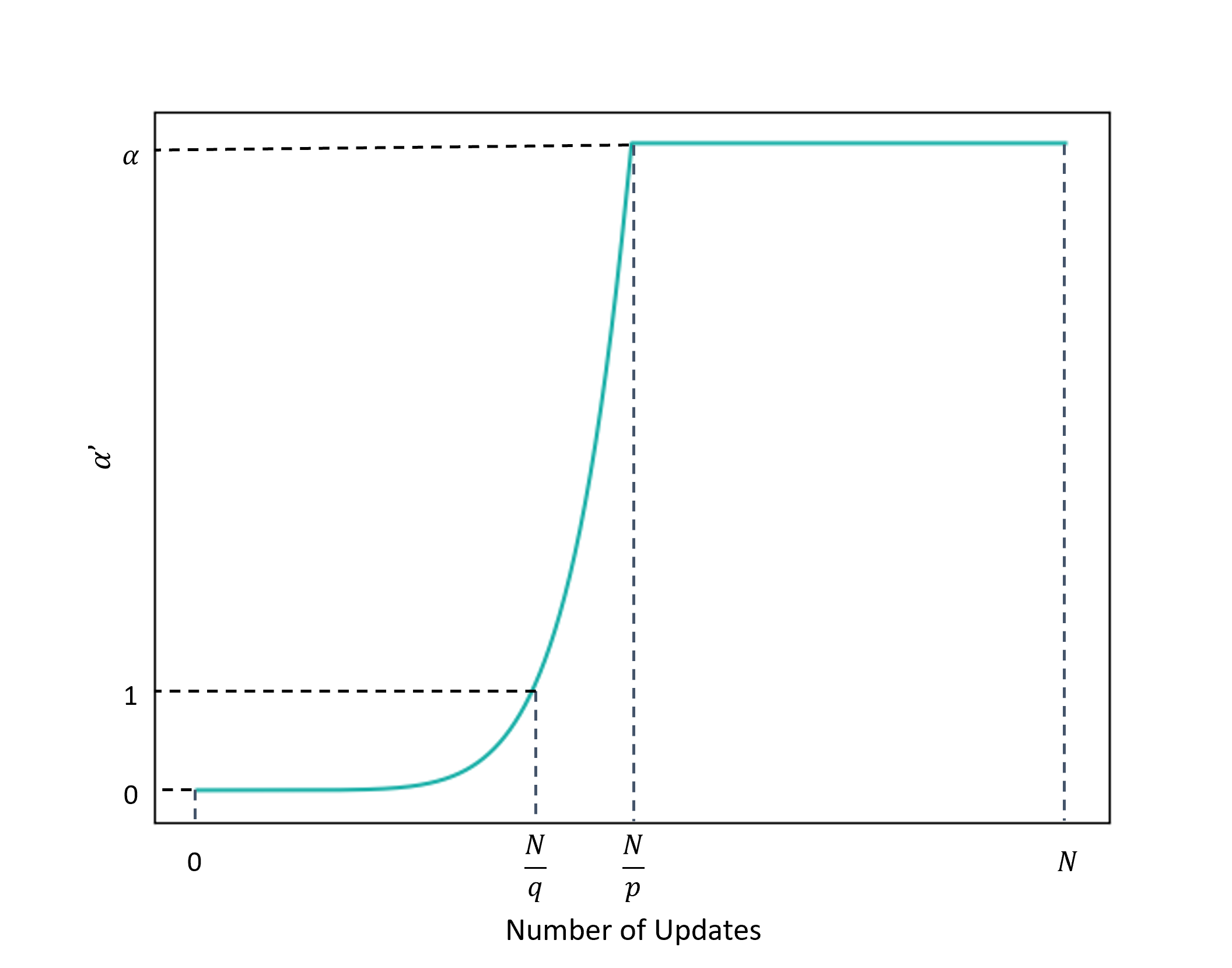}}
    \caption{An example of how $\alpha'$ increase with the control of $p$ and $q$. The relationship between $\frac{p}{q}$ and $\alpha$ determine shape (power) of the function. In a special case, adaptive learning is a linear increase if $q=\alpha p$.}
    \label{fig:function}
\end{figure}
$p$ and $q$ are two flexible hyper-parameters to control the weighting assigned to the intra-distillation during training. Note that a linear increase is a special case when $q=\alpha p$.

\begin{table*}[ht]
\centering
\resizebox{1\linewidth}{!}{
\begin{tabular}{l|ccccccccc|c}
\hline
Methods               & Ar             & De             & Es             & Fa             & He             & It             & Nl             & Pl             & Avg.           & \# parameters \\ \hline
Transformer \citep{vaswani2017attention}             & 28.62          & 33.07          & 38.89          & 19.64          & 35.13          & 32.76          & 35.90          & 19.69          & 30.46          & 49.98M        \\
SAGE \citep{liang2021no}                    & 29.76          & 33.51          & 39.22          & 20.01          & 36.19          & 32.77          & 36.12          & 20.45          & 31.00          & 49.98M        \\
R-Drop \citep{wu2021r}                  & 32.58          & 35.43          & 41.00          & 22.80          & 38.63          & 34.96          & 38.01          & 23.24          & 33.33          & 49.98M        \\
Switch Transformer \citep{fedus2021switch}       & 28.24          & 33.11          & 38.43          & 19.50          & 34.42          & 31.93          & 34.59          & 19.82          & 30.00          & 87.81M        \\ \hline
Intra-Distillation (JS Divergence, Ours) & 32.68 & 35.19 & 41.44 & 22.78 & 38.66 & 34.89 & 38.16 & 23.12 & 33.37 & 49.98M        \\
Intra-Distillation (X-Divergence, Ours) & \textbf{33.42} & \textbf{36.10} & \textbf{41.82} & \textbf{23.78} & \textbf{39.31} & \textbf{35.43} & \textbf{38.53} & \textbf{23.61} & \textbf{34.00} & 49.98M        \\ \hline
\end{tabular}
}
\caption{Results of 8 language pairs (Xx$\rightarrow$En) on the IWSLT'14 dataset.}
\label{tab:iwslt14_result}
\end{table*}

\begin{table}[ht]
\centering
\resizebox{1\linewidth}{!}{
\begin{tabular}{l|cc}
\hline
Methods                  & En$\rightarrow$De & \# parameters \\ \hline
Transformer \citep{vaswani2017attention}& 27.56               & 275M          \\ \hline
SAGE \citep{liang2021no}    & 27.76                  & 275M          \\
R-Drop \citep{wu2021r}         & 28.10                   & 275M          \\
Switch Transformer \citep{fedus2021switch}     & 27.80               & 577M          \\ \hline
Intra-Distillation (Ours) & \textbf{28.62}      & 275M          \\ \hline
\end{tabular}
}
\caption{Results of the WMT'17 En$\rightarrow$De task. }
\label{tab:wmt17_result}
\end{table}

\begin{table*}[ht]
\centering
\resizebox{1\linewidth}{!}{
\begin{tabular}{lccccccccc}
\hline
\multicolumn{1}{l|}{\multirow{2}{*}{Methods}}                                            & QNLI                           & MNLI-m/mm                           & CoLA                           & QQP                            & STS-B                          & RTE                            & SST-2                          & MRPC                           & Avg.                            \\
\multicolumn{1}{l|}{}                                                                    & Acc.                           & Acc.                                & Mcc.                           & F1                             & P/S Corr.                      & Acc.                           & Acc.                           & F1                             & Score                           \\ \hline
\multicolumn{1}{l|}{$\text{BERT}_{\text{base}}$ \citep{devlin2019bert}} & 90.6                           & 84.7/83.6                           & 54.0                           & 71.1                           & 86.6                           & 66.8                           & 93.4                           & 88.6                           & 79.93                           \\
\multicolumn{1}{l|}{SAGE \citep{liang2021no}}                           & 90.8                           & 84.9/83.8                           & 54.5                           & 71.3                           & 87.1                           & \textbf{69.8} & \textbf{94.1} & \textbf{89.7} & 80.67                           \\
\multicolumn{1}{l|}{R-Drop \citep{wu2021r}}                             & 91.2                           & 85.0/84.3                           & 54.0                           & 72.3                           & 87.1                           & 66.6                           & 93.8                           & 88.1                           & 80.27                           \\ \hline
\multicolumn{1}{l|}{Intra-Distillation (Ours)}                                            & \textbf{91.7} & \textbf{85.2/84.2} & \textbf{55.1} & \textbf{72.4} & \textbf{87.5} & 67.5                           & \textbf{94.1} & 89.0                           & \textbf{80.74} \\ \hline
\end{tabular}
}
\caption{Results of the GLUE benchmark.}
\label{tab:gleu_result}
\end{table*}

\section{Experiments}
\label{sec:experiments}
We evaluate our method on widely used benchmarks for machine translation, natural language understanding and zero-shot cross-lingual transfer. We pass the model $K=3$ times for all experiments. We explain the influence of $K$ in Section \ref{sec:model_pass}. Note that we briefly describe key training settings for each task but leave details in Appendix \ref{app:train_detail}.
\subsection{Baselines}
We consider three baselines in our experiments. All baseline results are from our implementation and followed by the settings from the original papers.
\paragraph{SAGE} SAGE \citep{liang2021no} is a sensitivity-guided adaptive learning rate method, which encourages all parameters to be trained sufficiently by assigning higher learning rates to low-sensitivity parameters, and vice versa. SAGE is on the same study line of salience of redundant parameters as ours but using different methods.
\paragraph{R-Drop} R-drop \citep{wu2021r} is a recently proposed state-of-the-art method that focuses on minimizing the inconsistency between training and inference, rather than focusing on parameter sensitivity. However though motivated differently, this method derives a similar loss objective to our proposed intra-distillation. They pass the model twice and minimize the difference of two outputs by using the Jeffrey divergence (the term for the symmetric KL). However, the advantage of X-divergence is that it is bounded while Jeffrey divergence is not, which makes training more stable. We show that our proposed X-divergence for multi-pass learning with adaptive $\alpha'$ can achieve superior performance. Interestingly, we theoretically prove that Jeffrey divergence is the upper bound of  X-divergence in Appendix \ref{app:upper_bound}. 
% However, two advantages of X-divergence is that 1) it does not require the condition of absolute continuity to be satisfied by the probability distribution and 2) its maximum value is bounded while Jeffrey divergence is not, which makes training more stable.
\paragraph{Switch Transformer} Scaling up the number of parameters has been usually used for improving model performance. To show the parameter efficiency of our method, We also compare our method to a well-known sparsely activated model, switch transformer \citep{fedus2021switch}, in machine translation tasks. Considering the huge memory expense, we here only consider 4-expert switch transformer.
\subsection{Machine Translation}
\label{sec:ex_mt}
\paragraph{Data and Settings} We consider both low- and high- resource data conditions. For the low-resource scenario, we collect 8 English-centric language pairs from IWSLT'14 (Xx$\rightarrow$En), including Arabic (Ar), German (De), Spanish (Es), Farsi (Fa), Hebrew (He), Italian (It), Dutch (Nl), Polish (Pl). The training pairs ranges from 89K to 160K. We use the transformer$_{\texttt{small}}$ architecture \citep{vaswani2017attention}. We set $\alpha=5,p=5, q=10, N=50\text{K}$ for adaptive intra-distillation. For the high-resource scenario, we consider WMT 17 En$\rightarrow$De translation task, whose corpus size is 4.5M. Following \citet{ott-etal-2019-fairseq}, we separate 40K training pairs as the validation set and \texttt{newstest2014} as the test set. We use the transformer$_{\texttt{big}}$ model and set $\alpha=5,p=6.25, q=10, N=50\text{K}$. For both scenarios, the dropout rate is 0.1 for attention layers and 0.3 for FFN layers. We tokenize all sentences by \texttt{sentencepiece} \citep{kudo-richardson-2018-sentencepiece}. We report \texttt{sacreBLEU} points \citep{post-2018-call}.

\paragraph{Results} Results for IWSLT'14 are show in Table \ref{tab:iwslt14_result}. SAGE outperforms the transformer baseline by 0.54 BLEU points on average, which matches the similar improvement in \citet{liang2021no}. Interestingly, the switch transformer is not parameter-efficient when we double the parameters. At best, it only provides modest improvements in some experiments and even degenerates the performance in others. R-drop is the most competitive method. However, we still achieve the best performance by boosting the transformer model \textbf{3.54} BLEU points on average. Moreover, Our X-divergence outperform JS divergence by 0.63 on average. In Table \ref{tab:wmt17_result}, similar observations also holds for the WMT'17 task, where we achieve the highest improvement (1.06 BLEU).

\subsection{Natural Language Understanding}
\paragraph{Data and Settings} We evaluate our methods and baselines on the General Language
Understanding Evaluation (GLUE) benchmark\footnote{We use Equation \ref{eq:min_mse} for STS-B becasue it is a regression task, while the others we still use Equation \ref{eq:min_kl}. } \citep{wang2018glue}. We fine-tune pre-trained BERT \citep{devlin2019bert} base model on each task of GLUE. We follow the hyperparameter settings suggested by \citet{liu2020microsoft}. To have a fair comparison to SAGE, we adopt Adamax \citep{kingma2014adam} optimizer. The dropout rate is 0.1. $\alpha$ for each task is in the range of \{0.5, 1.0, 1.5\}. Recall that we do not apply adaptive $\alpha$ to intra-distillation if $\alpha \leq 1$.
\paragraph{Results} We report the result of GLUE test set in Table \ref{tab:gleu_result}. Scores are calculated by GLUE online evaluation server. SAGE and our method achieve similar gains (0.74 vs. 0.79) over the BERT baseline on average. However, our method performs much better on large datasets, e.g., QNLI (105K) with a gain of 1.1, QQP (364K) with a gain of 1.3, while SAGE achieves modest improvements on these tasks. On the other hand, interestingly, SAGE is more effective on small datasets, e.g., RTE (2.4K) with a gain of 3.0 and MRPC (3.7K) with a gain of 1.1. Our method also outperform R-Drop by 0.47 on average.

\begin{table}[ht]
\centering
\resizebox{1\linewidth}{!}{
\begin{tabular}{l|cc}
\hline
Method                    & NER           & TyDiQA        \\ \hline
XLM-R                     & 64.6          & 55.8          \\
XLM-R + Intra-Distillation & \textbf{66.0} & \textbf{58.0} \\ \hline
\end{tabular}
}
\caption{Zero-shot cross-lingual transfer results (F1 score) on NER and TyDiQA. The scores are averaged by all languages and 5 trials with different random seeds.}
\label{tab:xtreme_result}
\end{table}

\subsection{Cross-Lingual Transfer}
\paragraph{Data and Settings} We consider a low-level and a high-level task for zero-shot cross-lingual transfer, i.e., Wikiann Named-Entity Recognition (NER) task \citep{ner} and Typologically Diverse Question Answering-Gold Passage (TyDiQA) \citep{artetxe2020cross}. We download datasets from the XTREME-R benchmark \citep{ruder-etal-2021-xtreme}. NER and TyDiQA cover 48 and 9 languages, respectively. Following \citet{xu-murray-2022-por}, the model architecture of NER is based on pre-trained XLM-R$_\text{large}$ attached with a feed-forward token-level classifier. For TydiQA, the representations of all subwords in XLM-R$_\texttt{base}$ are input to a span classification head –-- a linear layer computing the start and the end of the answer. The models are only trained on English and then evaluated on all languages. We set dropout rate as 0.1 and run 10 and 15 epochs for NER and TyDiQA, both with $\alpha=1$.
\paragraph{Results} Averaged results (F1 scores) among all languages are shown in Table \ref{tab:xtreme_result}. We run the model 5 times with 5 different random seeds and report the averaged F1 score. The models have better overall performance after applying intra-distillation. NER achieves 1.4 F1 improvement on average. The high-lever task, TyDiQA,  benefits more from intra-distillation, and obtain 2.2 F1 improvement. Please find full results on all languages in Appendix \ref{app:full_cs_results}.
\begin{figure}[ht]
    \centering
    \resizebox{1\linewidth}{!}{
    \includegraphics[width=7.5cm]{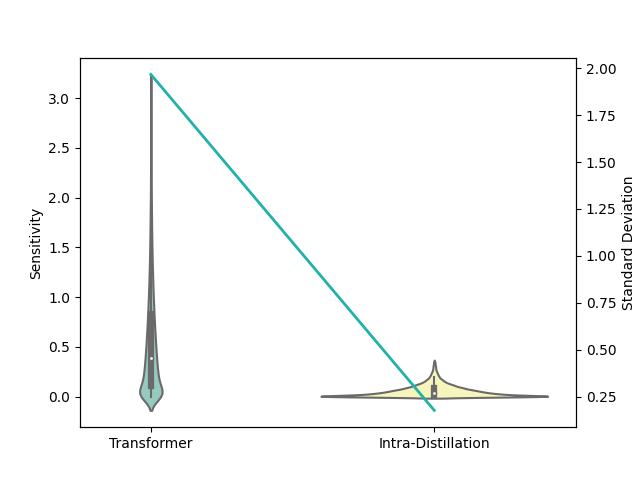}}
    \caption{Sensitivity distribution comparison along with their standard deviation between the models with and without using intra-distillation on the IWSLT'14 De$\rightarrow$En MT task.}
    \label{fig:dis_compare}
\end{figure}

% \begin{figure}[ht]
%     \centering
%     \resizebox{1\linewidth}{!}{
%     \includegraphics[width=7.5cm]{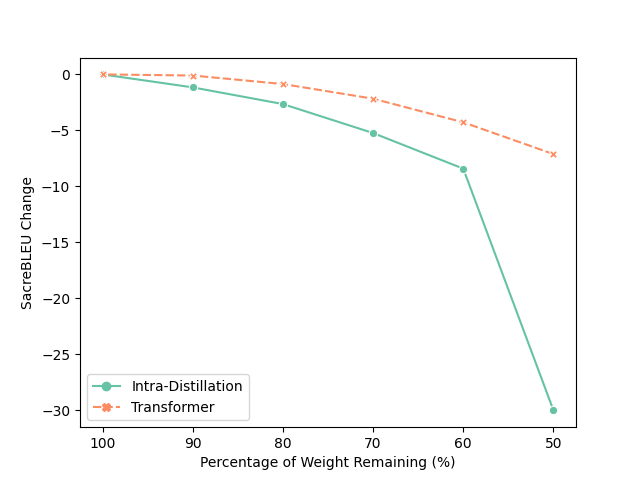}}
%     \caption{Change of model performance with different pruning ratio for the translation task. Values of the performance are shown in Appendix \ref{app:sec:prune}.}
%     \label{fig:prune}
% \end{figure}

\begin{figure}[ht]
     \centering
     \begin{subfigure}[b]{0.48\textwidth}
         \centering
         \resizebox{1\linewidth}{!}{
         \includegraphics[width=\textwidth]{figures/weight_bleu.png}}
         \caption{Change in BLEU scores}
         \label{fig:prune_change}
     \end{subfigure}
     \hfill
     \begin{subfigure}[b]{0.48\textwidth}
         \centering
         \resizebox{1\linewidth}{!}{
         \includegraphics[width=\textwidth]{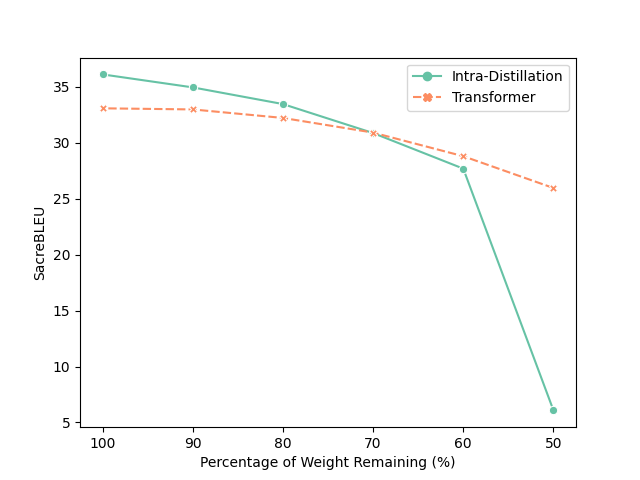}}
         \caption{BLEU scores}
         \label{fig:prune_value}
     \end{subfigure}
     \caption{Model performance with different pruning ratio for the translation task.} 
     \label{fig:prune}
\end{figure}

\section{Analysis}
\subsection{More Balanced Contribution}
\label{sec:balanced_contribution}
We here focus on analyzing IWSLT'14 De$\rightarrow$En translation task, but we also show the similar findings on the QQP task in Appendix \ref{app:qqp}. We show the sensitivity distribution comparison with and without intra-distillation in Figure \ref{fig:dis_compare}. The sensitivity is computed on the model which performs best on the valid set. After intra-distillation, sensitivity distribution is more concentrated, implying parameter contribution is more balanced as our goal.

%subsection{How Much can low-sensitivity Parameters Contribute}
% Concentrated sensitivity of intra-distillation in Figure \ref{fig:dis_compare} tells us an overall intuition of more balanced contribution among parameters.
We are also interested in the contribution of parameters with respect to the downstream metric. We compare a typically trained transformer model to one trained with intra-disillation, pruning up to 50\% of the parameters. We prune parameters in order of sensitivity, starting with the least sensitive parameters.  As shown in Figure~\ref{fig:prune}, BLEU drops significantly faster for the intra-distillation model, as more parameters are pruned. This suggests that the low-sensitivity parameters of the intra-distilled model contribute much more (to task performance) than in the regular transformer model, so the model generalization degenerates faster without them. Particularly, we observe that intra-distillation significantly improves the contribution of the parameters  within the lowest 40\%-50\% parameter range. After removing them, BLEU further drops around 20 points (yielding a BLEU score near 5, which is basically an unintelligible  translation), but the regular transformer only drops less than 3 points and still scores over 25 BLEU in total. Thus, intra-distillation shows the importance of these lower-sensitivity parameters and the significant performance degeneration after pruning them. 

%We are also interested in the contribution of parameters with respect to the downstream metric. We compare a typically trained transformer model to one trained with intra-disillation, pruning up to 50\% of the parameters. We prune parameters in order of sensitivity, starting with the least sensitive parameters.  
% As shown in Figure~\ref{fig:prune}, BLEU drops significantly faster for the intra-distillation model, as more parameters are pruned. This suggests that the low-sensitivity parameters of the intra-distilled model contribute much more (to task performance) than in the regular transformer model...

\begin{figure}[ht]
    \centering
    \resizebox{1\linewidth}{!}{
    \includegraphics[width=7.5cm]{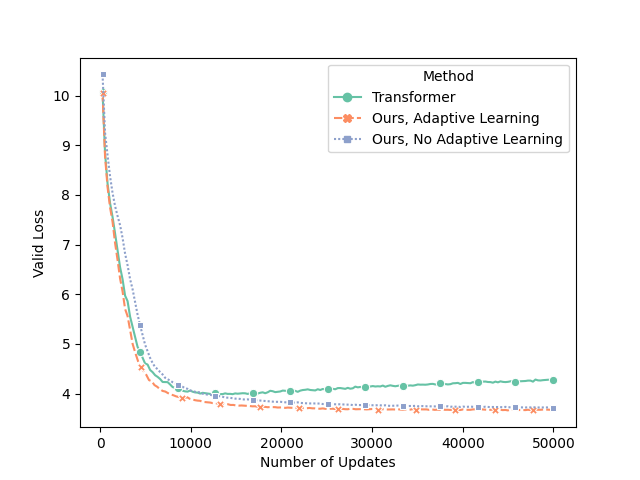}}
    \caption{Valid loss curves comparison between intra-distillation with and without adaptive learning. We also add the loss curve of regular transformer as a reference.}
    \label{fig:update_vs_loss}
\end{figure}

\subsection{Adaptive Learning for Intra-Distillation}
\label{sec:analysis-adpative}
We here show how the adaptive learning method helps convergence. We conduct an apples-to-apples comparison in IWSLT'14 De$\rightarrow$En translation task between intra-distillation with and without dynamic $\alpha'$. Our $\alpha$ is 5 as set above. As shown in Figure \ref{fig:update_vs_loss}, without adaptive learning, the valid loss is substantially higher than the loss of the regular transformer at the beginning of training. However, adaptive learning eliminates this issue and the loss even drops faster than the baseline model. Moreover, the valid loss with adaptive learning is always lower than the one without adaptive learning at the same training step.

\subsection{Number of Model Passes}
\label{sec:model_pass}
We examine the impact of the number of model passes $K$. We conduct experiments for IWSLT'14 De$\rightarrow$En with various $K$ ranging from 2 to 6. Figure \ref{fig:step_vs_sacrebleu} shows that multi-pass training is crucial to the model performance. The resultant gain obtained is 0.4 when we increase from 2 passes to 3 passes. However, the performance is similar if the number of passes is larger than 3. Although the 4-pass model works slightly better than 3-pass model, we still pass the model 3 times for all tasks considering the computational cost and slight improvement.
\begin{figure}[ht]
    \centering
    \resizebox{1\linewidth}{!}{
    \includegraphics[width=7.5cm]{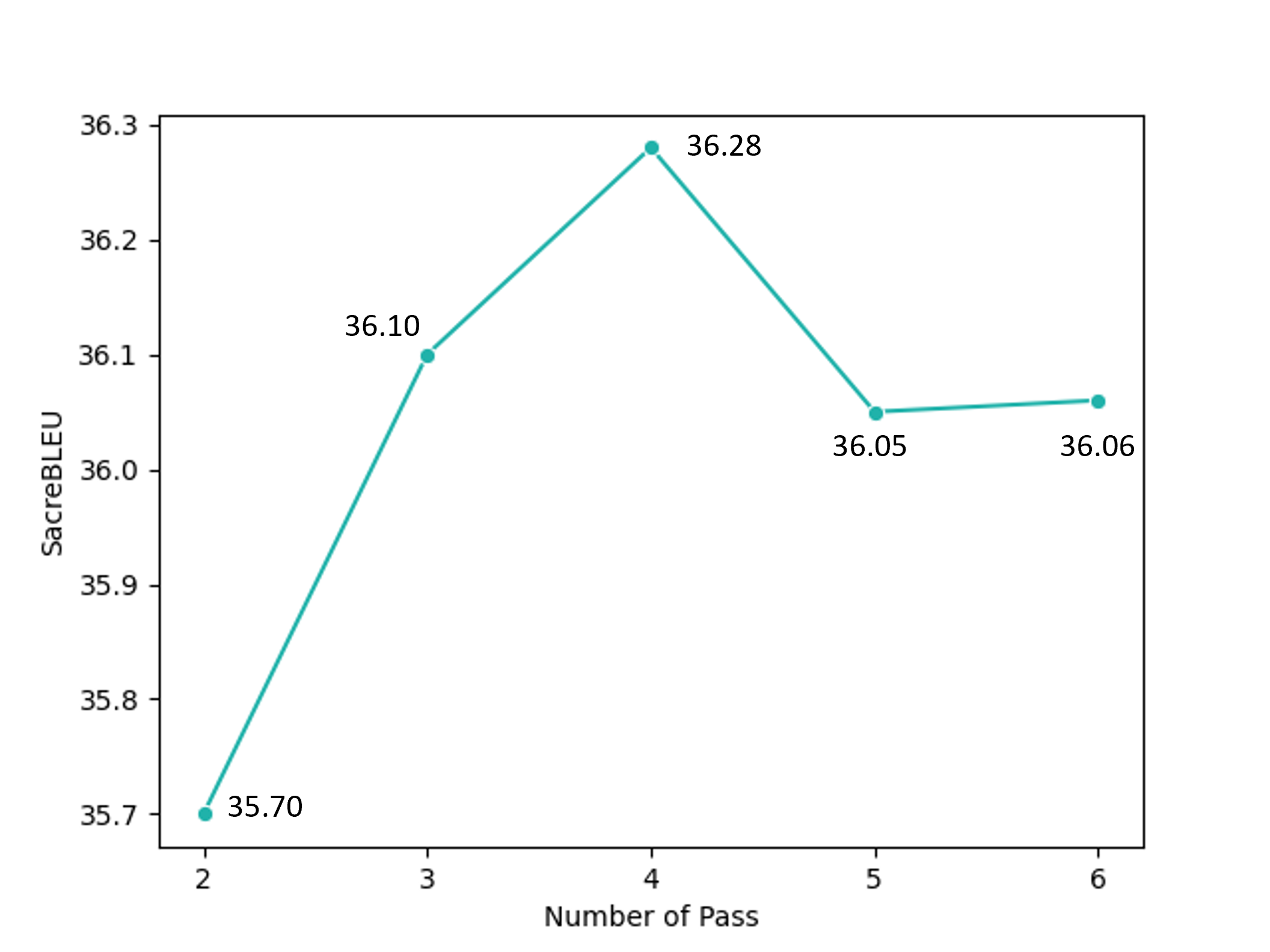}}
    \caption{IWSLT'14 De$\rightarrow$En BLEU vs. number of model passes $K$.}
    \label{fig:step_vs_sacrebleu}
\end{figure}

\section{Conclusions}
Taking an opposite view from pruning redundant parameters, we questioned whether we overlook the potential of these redundant parameters and encouraged all parameters to contribute equally to improve the model generalization performance. We first introduced the concept of \textit{degree of contribution balance} to describe how balanced of all parameters is. Then, we used balanced parameter contribution to explain the `dark knowledge' that successfully transfers in knowledge distillation, by analyzing the contribution gap among parameters within a model. With the goal of adding a constraint term to balance the parameter contribution, we proposed \textbf{intra-distillation} with a strength-adaptive learning method. With wide-ranging experiments and analysis on machine translation, natural language understanding and zero-shot cross-lingual transfer tasks, we demonstrated that intra-distillation are capable of improving the model performance significantly and  balance the parameter contribution effectively.

\section*{Limitations}
This method modifies the training objective and is not specific to data. As such, any standard limitation of a neural training method will apply here, such as biases in data, hyperparameter choices, etc. However, being data agnostic also implies that the method should in theory be language and task agnostic. We've shown improvements on multiple languages from diverse language families on multiple tasks, yet naturally, this list is non-exhaustive and limited to the NLP domain. We expect the method generalizes to tasks outside of language, but have not explored these. Furthermore, since we need to pass the model $K$ times, the method incurs a higher time cost  (or more memory cost if we concatenate the same $K$ inputs for one pass computation\footnote{One can concatenate the same $K$ inputs and feed them to the model once instead of passing the model $K$ times, which cost more memory rather than training time.}). However, though this cost is a limitation, we argue that it is acceptable given a user is cognizant of this and compares it to the improved performance.  

\section*{Acknowledgements}
We thank Marc Marone, Xuan Zhang and Shuoyang Ding and anonymous reviewers for their helpful suggestions. This work was supported in part by IARPA BETTER (\#2019-19051600005). The views and conclusions contained in this work are those of the authors and should not be interpreted as necessarily representing the official policies, either expressed or implied, or endorsements of ODNI, IARPA, or the U.S. Government. The U.S. Government is authorized to reproduce and distribute reprints for governmental purposes notwithstanding any copyright annotation therein.

% Entries for the entire Anthology, followed by custom entries
\bibliography{anthology,custom}
\bibliographystyle{acl_natbib}

\clearpage
\appendix
\section{Training Details}
\label{app:train_detail}
\subsection{IWSLT'14 Translation}
We use the same training configuration for all 8 language pairs. We filter out the training pairs whose length ratio is larger than 1.5 or one of length is longer than 175 tokens. We use small transformer architecture \citep{vaswani2017attention},  with FFN dimension size 1024, attention dimension size 512 and 4 attention heads. The batch size is 4096 tokens. We jointly train a 12K bilingual vocabulary by using sentencepiece \citep{kudo-richardson-2018-sentencepiece} for each language pair. The maximum learning rate is 0.0005. The optimizer is Adam \citep{kingma2014adam} with \texttt{inverse\_sqrt} learning rate scheduler and weight decay of 0.0001. The maximum training update is 50K with 8K warm-up steps. At inference time, we use beam search with width 5 and use a length penalty of 1.
\subsection{WMT'17 Translation}
We filter out the training pairs whose length ratio is larger than 1.5 or one of length is longer than 256 tokens. We use large transformer architecture \citep{vaswani2017attention},  with FFN dimension size 4096, attention dimension size 1024 and 16 attention heads. The batch size is 4096 tokens but we accumulate gradients for 16 times. We jointly train a 32K bilingual vocabulary by using sentencepiece \citep{kudo-richardson-2018-sentencepiece}. The maximum learning rate is 0.0005. The optimizer is Adam \citep{kingma2014adam} with \texttt{inverse\_sqrt} learning rate scheduler and weight decay of 0.0001. The maximum training update is 50K with 4K warm-up steps. At inference time, we use beam search with width 4 and use a length penalty of 0.6.
\subsection{GLUE benchmark}
We use pre-trained BERT$_{\texttt{base}}$ model \citep{devlin2019bert} and fine-tune it on each GLUE task. We set the maximum sentence length as 128. Batch size is 32 sentences. The optimizer is Adamax \citep{kingma2014adam} with 2e-4 learning rate. We run 20 epochs for each task. The result for STS-B is the Pearson correlation. Matthew’s correlation is used for CoLA. F1 is used for QQP and MRPC. Other tasks are measured by Accuracy. We leave the detailed settings of $\alpha$, $p$ and $q$ of every GLUE task in Table \ref{tab:gleu_setting}.

\begin{table}[ht]
\centering
\resizebox{1\linewidth}{!}{
\begin{tabular}{l|cccccccc}
\hline
      & QNLI & MNLI & CoLA & QQP & STS-B & RTE & SST-2 & MRPC \\ \hline
$\alpha$ & 0.5  & 0.5  & 0.5  & 0.5 & 1     & 1   & 0.5   & 1.5  \\
$p$     & -    & -    & -    & -   & -     & -   & -     & 6.67 \\
$q$     & -    & -    & -    & -   & -     & -   & -     & 10   \\ \hline
\end{tabular}
}
\caption{Settings of $\alpha$, $p$ and $q$ of every task at GLUE benchmark.}
\label{tab:gleu_setting}
\end{table}

\subsection{Xtreme-R Benchmark}
We consider NER and TyDiQA tasks to evaluate the effectiveness of intra-distillation in zero-shot cross-lingual transfer learning. NER and TyDiQA respectively contains 48 and 9 languages. For NER, we use XLM-R$_\texttt{large}$ model architecture \citep{conneau2020unsupervised}. The max length is 128. We train the model for 10 epochs with learning rate 2e-5, batch size 8 and gradient accumulation 4. For TyDiQA, we use XLM-R$_\texttt{base}$ model architecture. The max length is 384. We train the model for 15 epochs with learning rate 3e-5, batch size 8 and gradient accumulation 4.

\section{Sensitivity Distribution Visualization Details}
\label{app:sensitivity_vis}
Sensitivity of each parameter approximates to the absolute multiplication of its value and gradient. We randomly pick 100 batches and feed to the model to retrieve the gradients. Note that We also remove the top 1\% highest-sensitive parameters to ease the illustration. We store the sensitivity of each parameter and randomly sample 10\% of them to visualize them via violin plots.

\section{The Bound of X-Divergence}
\label{app:upper_bound}
Here, we show that our X-divergence is upper bounded by the Jeffrey divergence. Usually, Jeffrey divergence only serves for two distributions:
\begin{equation}
    J(p_1,p_2) = \mathbbm{KL}(p_1\parallel p_2) + \mathbbm{KL}(p_2\parallel p_1)
\end{equation}
We generalize it to measure multiple (say, $K$) distributions:
\begin{multline}
    J(p_1,\cdots,p_K) = \sum_{i=1}^{K}\sum_{j=1}^{K}\mathbbm{KL}(p_i\parallel p_j) \\+ \mathbbm{KL}(p_j\parallel p_i)
\end{multline}
Our proposed X-divergence is formulated as follows:
\begin{multline}
    X(p_1,\cdots,p_K) = \frac{1}{K}\sum_{i=1}^K \mathbbm{KL}(p_i\parallel\bar{p}) \\+ \mathbbm{KL}(\bar{p}\parallel p_i) 
\end{multline}

\begin{theorem}
X-divergence is upper bounded by the Jeffrey Divergence:
\begin{equation}
    X(p_1,\cdots,p_K) \leq \frac{1}{K^2} J(p_1,\cdots,p_K)
    \label{eq:app:0}
\end{equation}
\end{theorem}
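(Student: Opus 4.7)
The plan is to reduce the bound to the joint convexity of the Kullback--Leibler divergence, applied separately in each argument. Recall that $(p,q)\mapsto \mathbb{KL}(p\parallel q)$ is jointly convex, which implies that for any fixed $p$ the map $q\mapsto \mathbb{KL}(p\parallel q)$ is convex, and for any fixed $q$ the map $p\mapsto \mathbb{KL}(p\parallel q)$ is convex. I would state this up front (it follows from the log-sum inequality) and use it as the only nontrivial ingredient.

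The two key steps are then direct applications of Jensen's inequality. Since $\bar{p}=\frac{1}{K}\sum_{j=1}^{K}p_j$ is a uniform convex combination of the $p_j$'s, convexity in the second slot gives
\begin{equation*}
\mathbb{KL}(p_i\parallel \bar{p})\;\leq\;\frac{1}{K}\sum_{j=1}^{K}\mathbb{KL}(p_i\parallel p_j),
\end{equation*}
and, symmetrically, convexity in the first slot gives
\begin{equation*}
\mathbb{KL}(\bar{p}\parallel p_i)\;\leq\;\frac{1}{K}\sum_{j=1}^{K}\mathbb{KL}(p_j\parallel p_i).
\end{equation*}

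To finish, I would add these two inequalities, sum over $i=1,\dots,K$, and divide by $K$. The left-hand side is exactly the definition of $X(p_1,\dots,p_K)$, and the right-hand side becomes $\frac{1}{K^{2}}\sum_{i=1}^{K}\sum_{j=1}^{K}\bigl[\mathbb{KL}(p_i\parallel p_j)+\mathbb{KL}(p_j\parallel p_i)\bigr]$, which is precisely $\frac{1}{K^{2}}J(p_1,\dots,p_K)$, giving inequality \eqref{eq:app:0}.

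There is no real obstacle beyond citing (or briefly justifying) the convexity of KL in each argument; everything else is bookkeeping. The only place a reader might pause is checking that one really does want the first-argument convexity for the $\mathbb{KL}(\bar p\parallel p_i)$ term and the second-argument convexity for the $\mathbb{KL}(p_i\parallel \bar p)$ term, so I would be explicit about which slot is being convexified at each step. It is also worth noting, as a sanity check, that the bound is tight when $p_1=\cdots=p_K$ (both sides vanish), which confirms the constant $1/K^{2}$ is the right one.
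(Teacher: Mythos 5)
Your proof is correct and follows essentially the same route as the paper's: the same decomposition into the two one-sided inequalities $\frac{1}{K}\sum_{i}\mathbbm{KL}(p_i\parallel\bar{p})\leq\frac{1}{K^2}\sum_{i,j}\mathbbm{KL}(p_i\parallel p_j)$ and its mirror image, each obtained by bounding $\bar{p}$ as a uniform convex combination. The only difference is cosmetic: the paper derives the two per-term bounds by hand (AM--GM on $\bar{p}$ for the second-argument case, Jensen on $x\log x$ for the first-argument case), which is exactly the coordinate-wise content of the convexity-of-KL-in-each-slot fact you cite.
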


\begin{proof}
We separate the proof in two parts. We prove that the first and second term of $J$ divergence are the upper bound of the first and second term of X-divergence, respectively, i.e., 
\begin{equation}
    \frac{1}{K}\sum_{i=1}^K \mathbbm{KL}(p_i\parallel\bar{p}) \leq \frac{1}{K^2}\sum_{i=1}^{K}\sum_{j=1}^{K}\mathbbm{KL}(p_i\parallel p_j)
    \label{eq:app:1}
\end{equation}
and
\begin{equation}
    \frac{1}{K}\sum_{i=1}^K\mathbbm{KL}(\bar{p}\parallel p_i) \leq \frac{1}{K^2}\sum_{i=1}^{K}\sum_{j=1}^{K}\mathbbm{KL}(p_j\parallel p_i).
    \label{eq:app:2}
\end{equation}
We first prove the Equation \ref{eq:app:1}. Since each $p_i \geq 0$, by the inequality of the arithmetic and geometric means, we have
\begin{equation*}
    \frac{\sum_{i=1}^Kp_i}{K} \geq \sqrt[K]{\prod_{i=1}^Kp_i}. \
\end{equation*}
Thus, it follows
\begin{equation*}
    \begin{split}
        &\frac{1}{K}\sum_{i=1}^K \mathbbm{KL}(p_i\parallel\bar{p}) \\&= \frac{1}{K}\sum_{i=1}^Kp_i\log(\frac{p_i}{\frac{\sum_{j=1}^Kp_j}{K}}) \\
        & \leq  \frac{1}{K}\sum_{i=1}^Kp_i\log(\frac{p_i}{\sqrt[K]{\prod_{j=1}^Kp_j}})\\
        & =\frac{1}{K}\sum_{i=1}^Kp_i(\log p_i - \frac{\sum_{j=1}^K\log p_j}{K})\\
        & = \frac{1}{K}\sum_{i=1}^Kp_i\frac{\sum_{j=1}^K\log p_i-\log p_j}{K} \\
        & = \frac{1}{K^2}\sum_{i=1}^K\sum_{j=1}^Kp_ilog\frac{p_i}{p_j} \\
        & = \frac{1}{K^2}\sum_{i=1}^{K}\sum_{j=1}^{K}\mathbbm{KL}(p_i\parallel p_j).
    \end{split}
\end{equation*}
Now, we have proved the Equation \ref{eq:app:1}, and move to the proof of Equation \ref{eq:app:2}. Consider that the function $f(x)=x\log x$ is a convex function. Based on the Jensen's inequality, we have
\begin{equation*}
    \frac{\sum_{i=1}^Kp_i}{K}\log(\frac{\sum_{i=1}^Kp_i}{K}) \leq \frac{1}{K}(\sum_{i=1}^Kp_i\log p_i).
\end{equation*}
Thus, it follows
\begin{equation*}
\begin{split}
    &\frac{1}{K}\sum_{i=1}^K\mathbbm{KL}(\bar{p}\parallel p_i)\\
    &=\frac{1}{K}\sum_{i=1}^K\frac{\sum_{j=1}^Kp_j}{K}(\log\frac{\sum_{j=1}^Kp_j}{K}-\log p_i)\\
    &\leq\frac{1}{K}\sum_{i=1}^K\frac{\sum_{j=1}^Kp_j\log p_j}{K} - \frac{\sum_{j=1}^Kp_j\log p_i}{K}\\
    & = \frac{1}{K^2}\sum_{i=1}^K\sum_{j=1}^Kp_j\log\frac{p_j}{p_i}\\
    & =  \frac{1}{K^2}\sum_{i=1}^K\sum_{j=1}^K\mathbbm{KL}(p_j\parallel p_i).
\end{split}
\end{equation*}
We also have proved the Equation \ref{eq:app:2}. Thus, the proof of Equation \ref{eq:app:0} is done.

\end{proof}

\section{Full Results of Zero-Shot Cross-Lingual Transfer}
\label{app:full_cs_results}
We leave the full results of zero-shot cross-lingual transfer learning on the NER, TyDiQA task in Table \ref{app:tab:full_ner} and Table \ref{app:tab:full_tydiqa}, respectively.
\begin{table*}[ht]
\centering
\resizebox{1\linewidth}{!}{
\begin{tabular}{lccccccccccccccccccccccccc}
\hline
Methods                    & ar             & he             & vi             & id             & jv             & ms             & tl             & eu             & ml             & ta             & te             & af             & nl             & en             & de             & el             & bn             & hi             & mr             & ur             & fa             & fr             & it             & pt             & es             \\ \hline
XLM-R                      & 45.75          & \textbf{55.35} & \textbf{78.67} & 52.47          & 61.35          & 69.65          & 71.95          & 56.37          & \textbf{65.79} & 55.82          & \textbf{52.85} & 78.34          & 83.76          & 84.50          & \textbf{78.78} & 78.38          & 74.39          & 69.71          & 61.87          & 54.85          & \textbf{56.82} & 79.78          & 81.39          & 81.91          & \textbf{76.64} \\
XLM-R + Intra-Distillation & \textbf{49.26} & 53.31          & 77.65          & \textbf{53.15} & \textbf{63.89} & \textbf{71.13} & \textbf{75.70} & \textbf{62.91} & 62.97          & \textbf{59.66} & 51.53          & \textbf{79.30} & \textbf{84.39} & \textbf{85.40} & 78.59          & \textbf{80.97} & \textbf{76.82} & \textbf{71.54} & \textbf{63.46} & \textbf{56.05} & 51.28          & \textbf{80.68} & \textbf{81.78} & \textbf{82.46} & 76.61          \\ \hline
                           & bg             & ru             & ja             & ka             & ko             & th             & sw             & yo             & my             & zh             & kk             & tr             & et             & fi             & hu             & qu             & pl             & uk             & az             & lt             & pa             & gu             & ro             & Avg.           &                \\ \hline
XLM-R                      & 81.32          & 70.60          & 18.31          & 66.37          & \textbf{57.28} & 1.02           & 69.86          & 32.90          & 51.97          & 27.06          & 50.46          & \textbf{79.30} & 77.79          & 79.65          & 80.13          & 54.62          & 80.89          & 74.48          & 67.61          & 76.87          & 48.62          & 61.59          & 82.98          & 64.56          &                \\
XLM-R + Intra-Distillation & \textbf{82.46} & \textbf{70.87} & \textbf{21.67} & \textbf{67.46} & 56.04          & \textbf{1.76}  & \textbf{70.09} & \textbf{39.77} & \textbf{57.62} & \textbf{29.19} & \textbf{53.84} & 78.48          & \textbf{78.65} & \textbf{80.87} & \textbf{81.33} & \textbf{56.93} & \textbf{82.38} & \textbf{80.65} & \textbf{70.47} & \textbf{78.90} & \textbf{51.43} & \textbf{64.49} & \textbf{90.90} & \textbf{65.97} &                \\ \hline
\end{tabular}

}
\caption{Full results (F1) of the zero-shot cross-lingual NER task.}
\label{app:tab:full_ner}
\end{table*}

\begin{table*}[ht]
\centering
\resizebox{1\linewidth}{!}{
\begin{tabular}{lcccccccccc}
\hline
Methods                    & ar             & bn             & fi             & id             & ko             & ru             & sw             & te             & en             & Avg.           \\ \hline
XLM-R                      & 62.53          & \textbf{42.24} & 61.82          & 70.62          & 42.99          & 57.75          & 56.40          & 43.23          & 65.51          & 55.80          \\
XLM-R + Intra-Distillation & \textbf{67.03} & 41.24          & \textbf{62.54} & \textbf{74.91} & \textbf{44.50} & \textbf{59.46} & \textbf{57.07} & \textbf{47.29} & \textbf{68.20} & \textbf{58.03} \\ \hline
\end{tabular}
}
\caption{Full results (F1) of the zero-shot cross-lingual TyDiQA task.}
\label{app:tab:full_tydiqa}
\end{table*}

\section{More Balanced Contribution in The QQP Task}
\label{app:qqp}
Similar to the findings in Section \ref{sec:balanced_contribution}, sensitivity of all parameters becomes more balanced after using intra-distillation (Figure \ref{app:fig:dis_compare_qqp}). Moreover, in the one-shot unstructured pruning, performance of the model which is trained with intra-distillation drops faster than the regular model (Figure \ref{app:fig:weight_f1}). This also implies that lower-sensitivity parameters contribute more than the regular ones.
\begin{figure}[ht]
    \centering
    \resizebox{1\linewidth}{!}{
    \includegraphics[width=7.5cm]{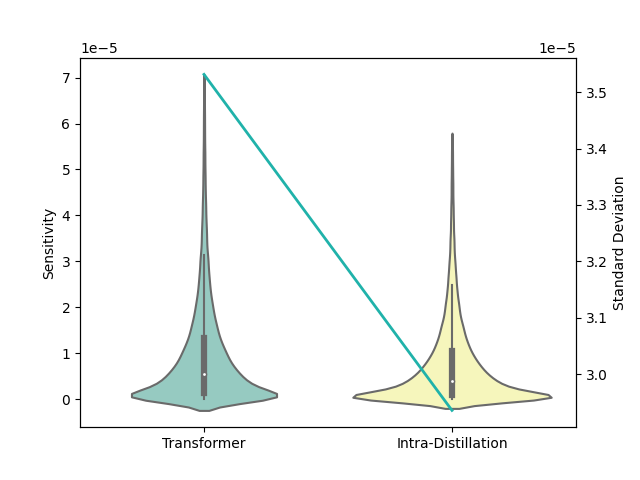}}
    \caption{Sensitivity distribution comparison along with their standard deviation between the models with and without using intra-distillation on the QQP task.}
    \label{app:fig:dis_compare_qqp}
\end{figure}

% \begin{figure}[ht]
%     \centering
%     \resizebox{1\linewidth}{!}{
%     \includegraphics[width=7.5cm]{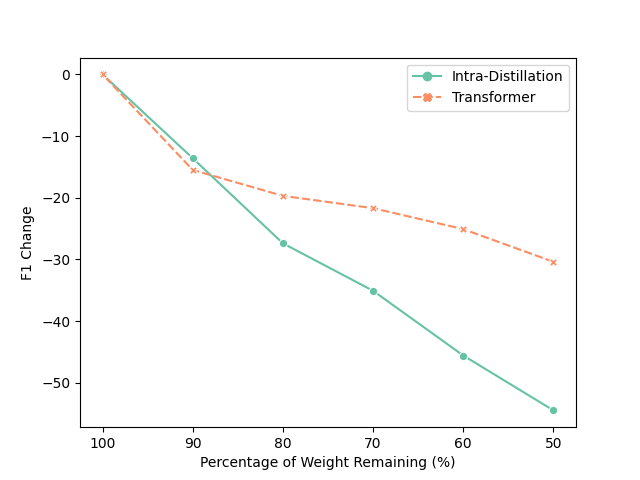}}
%     \caption{Change of model performance with different pruning ratio for the QQP task.}
%     \label{app:fig:weight_f1}
% \end{figure}

\begin{figure}[ht]
     \centering
     \begin{subfigure}[b]{0.45\textwidth}
         \centering
         \resizebox{1\linewidth}{!}{
         \includegraphics[width=\textwidth]{figures/weight_f1.png}}
         \caption{Change in F1 scores}
         \label{fig:mt_dis_compare2}
     \end{subfigure}
     \hfill
     \begin{subfigure}[b]{0.45\textwidth}
         \centering
         \resizebox{1\linewidth}{!}{
         \includegraphics[width=\textwidth]{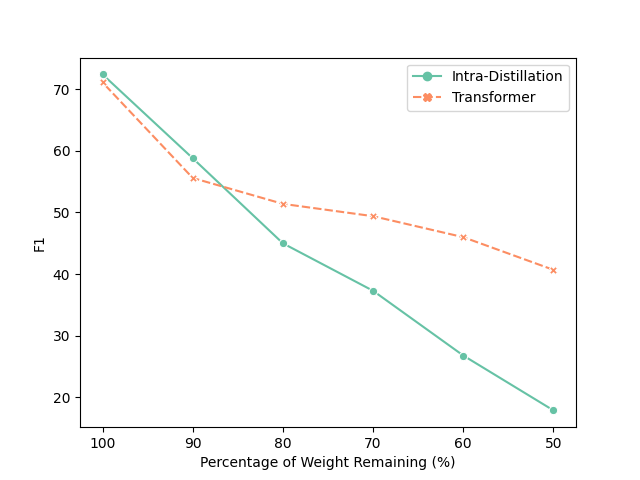}}
         \caption{F1 scores}
         \label{fig:qqp_dis_compare2}
     \end{subfigure}
     \caption{Model performance with different pruning ratio for the QQP task.} 
     \label{app:fig:weight_f1}
\end{figure}

% \section{Value of Pruning}
% \label{app:sec:prune}
% Instead of showing the performance change, we respectively show the result values of BLEU and F1 in machine translation and QQP tasks in Figure \ref{fig:dis_compare2}.
% \begin{figure}[ht]
%      \centering
%      \begin{subfigure}[b]{0.45\textwidth}
%          \centering
%          \resizebox{1\linewidth}{!}{
%          \includegraphics[width=\textwidth]{figures/prune_mt.png}}
%          \caption{IWSLT'14 De$\rightarrow$En}
%          \label{fig:mt_dis_compare2}
%      \end{subfigure}
%      \hfill
%      \begin{subfigure}[b]{0.45\textwidth}
%          \centering
%          \resizebox{1\linewidth}{!}{
%          \includegraphics[width=\textwidth]{figures/prune_qqp.png}}
%          \caption{QQP}
%          \label{fig:qqp_dis_compare2}
%      \end{subfigure}
%      \caption{Model performance with different pruning ratio for the IWSLT'14 De$\rightarrow$En translation and QQP task.} 
%      \label{fig:dis_compare2}
% \end{figure}

\end{document}